\newtheorem{defn}{Definition}
\newtheorem{thm}{Theorem}
\newtheorem{lem}{Lemma}
\providecommand{\TextSelim}[1]{#1}
\providecommand{\TextBurak}[1]{#1}
\def\BibTeX{{\rm B\kern-.05em{\sc i\kern-.025em b}\kern-.08em
    T\kern-.1667em\lower.7ex\hbox{E}\kern-.125emX}}
\DeclareMathOperator*{\argmax}{arg\,max}
\renewcommand{\vec}[1]{\boldsymbol{#1}}
\newcommand{\real}{\mathbb{R}}
\newcommand{\card}[1]{\vert{#1}\vert}
\algnewcommand\algorithmicforeach{\textbf{for each}}
\pgfplotsset{compat=1.17}
\begin{document}

\title{Over-the-Air Ensemble Inference \\with Model Privacy
\thanks{
The authors are with Department of Electrical and Electronic Engineering, Imperial College London, UK. Deniz Gündüz is also with Department of Engineering “Enzo Ferrari”, University of Modena and Reggio Emilia (UNIMORE), Italy. Email: \{s.yilmaz21, b.hasircioglu18, d.gunduz\}@imperial.ac.uk

\TextSelim{The present work has received funding from the European Union’s Horizon 2020 Marie Skłodowska Curie Innovative Training Network Greenedge (GA. No. 953775).} \TextBurak{This work was partially funded by the European Research Council (ERC) through Starting Grant BEACON (no. 677854) and by the UK EPSRC (grant no. EP/T023600/1) under the CHIST-ERA program.}}
}

\author{\IEEEauthorblockN{Selim F. Yilmaz, Burak Hasırcıoğlu, Deniz Gündüz}
}

\maketitle

\vspace{-20pt}
\begin{abstract}
We consider distributed inference at the wireless edge, where multiple clients with an ensemble of models, each trained independently on a local dataset, are queried in parallel to make an accurate decision on a new sample. In addition to maximizing inference accuracy, we also want to maximize the privacy of local models. We exploit the superposition property of the air to implement bandwidth-efficient ensemble inference methods. We introduce different over-the-air ensemble methods and show that these schemes perform significantly better than their orthogonal counterparts, while using less resources and providing privacy guarantees. We also provide experimental results verifying the benefits of the proposed over-the-air inference approach, whose source code is shared publicly on Github.
\end{abstract}

\begin{IEEEkeywords}
over-the-air computation, edge inference, differential privacy, ensemble inference, multi-class classification.
\end{IEEEkeywords}

\section{Introduction}

The increasing adoption of Internet-of-Things (IoT) devices results in the collection and processing of massive amounts of mobile data at the wireless edge. Conventional centralized machine learning (ML) methods are impractical for edge applications due to privacy concerns and limited communication resources. Implementing decentralized ML models at the edge solves this issue, and thus, {\it edge learning} and {\it edge inference} have attracted significant attention over the recent years~\cite{chen2021distributed,wu2019machine,lan2021progressive,gunduz2020communicate}. Edge learning aims to train large ML models in a distributed setting, whereas edge inference aims to make inferences in a distributed manner at the edge. 

Although collaborative training at the edge can bring significant advantages, it requires significant coordination and communication across nodes. Moreover, limited wireless resources are a major bottleneck, and noise, interference, and lack of accurate channel state information can prevent or slow down convergence of learning algorithms or results in a reduced accuracy~\cite{chen2021guest}. Therefore, in this paper, we consider collaborative inference using independently trained model at the edge nodes. While a growing body of work studies distributed learning over wireless networks, the literature on distributed wireless inference, particularly using deep learning techniques, is relatively limited~\cite{jankowski2020wireless,jankowski2020joint,shao2021learning}.


We treat the resultant problem as an ensemble inference problem, where the individual hypotheses of the nodes need to be conveyed to the querying server, and combined for the most accurate decision. Ensemble learning methods combine multiple hypotheses instead of constructing a single best hypothesis to model the data~\cite{dietterich2002ensemble}. In ensemble learning, each hypothesis vote for the final decision, where votes can have weights depending on their confidence. It is generally intractable to find the optimal hypothesis, and choosing a model among a set of equally-good models has the risk of choosing the model that has worse generalization performance; however, averaging these models would reduce this risk~\cite{dietterich2002ensemble,bishop1995neural}. Furthermore, weighted or voting based ensemble methods have theoretical guarantees, e.g., expected error of an averaging ensemble of models is not greater than the average of expected errors of the individual models with a mean square objective~\cite{bishop1995neural}.

Privacy is an important concern in all ML applications since the data about individuals can reveal sensitive information about them. In the case of ensemble inference, when the models are queried, their outputs may reveal sensitive information about their training sets. For instance, even when an adversary has black-box access to the models, whether or not a data point is used during training can be inferred via membership inference attacks~\cite{shokri2017membership}, or even the whole model can be reconstructed via model inversion attacks~\cite{tramer2016stealing}. Hence, even if adversaries can only observe the inference results, we need to introduce some additional mechanisms to protect the sensitive information. 

Differential privacy (DP) guarantees can be obtained via introducing additional randomness to the output, such as adding noise at the expense of some accuracy loss. Since DP bounds the amount of information leaked about the individuals, DP mechanisms make black-box attacks less effective. One approach to provide DP guarantees to ML is differentially private training~\cite{abadi2016deep}. Typically, Gaussian noise is added to the gradients during training, where the noise variance is determined according to the desired privacy level. This approach is extended to a federated setting in~\cite{geyer2017differentially}.

In this work, we are interested in enabling distributed inference at the edge while limiting the privacy leakage. One straightforward approach is to train the models in a DP manner. However, in this case, a fixed DP guarantee is achieved, and we cannot operate at different privacy-utility trade-offs during inference, which may be beneficial when serving users with different levels of trustworthiness. Moreover, DP training does not prevent the model stealing attacks since the model can be still reconstructed via black-box access to it. Hence, in this paper, we focus on embedding privacy-preserving mechanisms into the inference phase. We simply lift DP training assumption on the models and assume non-private training.

\TextBurak{In a recent line of work~\cite{amiri2020machine,amiri2020federated,zhu2019broadband}, it has been shown that, in distributed training tasks, over-the-air computation (OAC) can be exploited to use communication resources much more efficiently, and to significantly improve the learning performance. Instead of conventional digital communication, in OAC, clients transmit their updates simultaneously in an uncoded manner such that the receiver automatically gets the aggregated signal.}
\TextSelim{Hence, besides communication efficiency, OAC also helps preserving privacy of the clients.}
\TextBurak{Any noise received simultaneously with the aggregated signal at the receiver is effective at preserving the privacy of all the signals transmitted by the clients \cite{seif2020wireless, liu2020privacy, sonee2020efficient, seif2021privacy, hasirciouglu2021private}.}
In this work, we extend the use of OAC beyond distributed training and exploit it for efficient and private distributed edge inference. 

In particular, we introduce two different ensemble methods along with our private edge inference exploiting OAC. Our main contributions are as follows:
\begin{enumerate}
    \item To the best of our knowledge, this is the first work to employ OAC for distributed inference through an ensemble of models. We show that OAC improves both the privacy and the bandwidth efficiency.
    \item We provide flexible privacy guarantees depending on the scenario without imposing any restrictions on the training phase.
    \item We systematically compare and discuss privacy of the introduced ensemble methods, and show that the proposed framework with OAC performs significantly better than orthogonal counterparts while using less resources.
    \item To facilitate further research and reproducibility, we publicly share the source code of our framework on \href{https://github.com/selimfirat/oac-based-private-ensembles}{github.com/selimfirat/oac-based-private-ensembles}.
\end{enumerate}
\section{System Model and Problem Definition}

\textbf{Notation:} Boldface lowercase letters denote vectors (e.g., $\vec{p}$), boldface uppercase letters denote matrices (e.g., $\vec{P}$), non-boldface letters denote scalars (e.g., $p$ or $P$), and uppercase calligraphic letters denote sets (e.g., $\mathcal{P}$). Blackboard bold letters denote function domains (e.g., $\mathbb{P}$). $\real$, $\mathbb{N}$, $\mathbb{C}$ denote the set of real, natural and complex numbers, respectively. We define $[n]\triangleq\{1,2,\dots,n\}$, where $n\in\mathbb{N}$.

\textbf{System Model:} We consider privacy-preserving ensemble classification at the wireless edge. In this setting, there are $n$ clients each with a separate trained model $f_i:\mathbb{R}^d\to \mathbb{R}^k,i\in[n]$, for a classification task. We assume that local models are trained by using non-intersecting datasets.

We assume that the clients are connected to a central inference server (CIS) via a wireless medium, and, at time $t$, we assume each client $i$ knows its channel gain $h_{i,t}\in\mathbb{C}$. In our setting, the channel gains change across users and time steps, but they stay the same per inference round.
To reduce the total power consumption and to amplify the privacy guarantees, we consider random participation of the clients in each inference round \TextBurak{such that each client $i$ independently participates with probability $p$}. To limit the power consumption, only the clients whose channel gains are larger than a certain threshold participate the inference. This is one of the sources of randomness determining $p$. Hence, $p$ is a tunable parameter via such a transmission threshold. If necessary, via additional randomness, $p$ can be made even smaller. Each participating client makes a prediction denoted by $f_i(\vec{x}_t)$. The clients have a bandwidth of $k$ channel uses to convey their predictions to the CIS.

Let $\vec{y}_{i,t}\in\mathbb{R}^k$ denote the signal transmitted by client $i$. The received signal at CIS is 
\begin{equation}
\vec{z}_t = \sum_{i \in \mathcal{P}_t}  h_{i,t} \vec{y}_{i,t} +  \vec{n}_t,
\end{equation}
where $\vec{n}_t \in \real^k$ is the independently and identically distributed (i.i.d.) additive white Gaussian noise (AWGN) with variance $\sigma^{2}_\mathrm{channel}$, i.e., $\vec{n}_t \sim \mathcal{N}(\vec{0}, \sigma^2_\mathrm{channel} \vec{I}_k)$.

After receiving $\vec{z}_t$, CIS processes it via a function $s:\real^k\rightarrow[k]$ and outputs the most probable class. 

\textbf{Threat Model:} In our problem, the purpose is to limit the privacy leakage of clients' local models. This is equivalent to limiting the leakage about the individual datasets $\mathcal{D}_i,i\in[n]$. In our threat model, we assume all the clients are trusted, i.e., they are not interested in the sensitive features of the training datasets. On the other hand, CIS is honest but curios, i.e., it does not deviate from the protocol, but by using the signals it receives from the clients, it may try to infer sensitive information about the datasets. Hence, our goal is to limit the leakage to CIS about the datasets via $\vec{z}_t$ while trying to maximize the inference accuracy.

\section{Methodology}\label{sec:methodology}
Here, we introduce the modules of our framework gradually, which is summarized in \figref{main_figure}.
\begin{figure*}[t!]
    \centering
    \includegraphics[width=\textwidth]{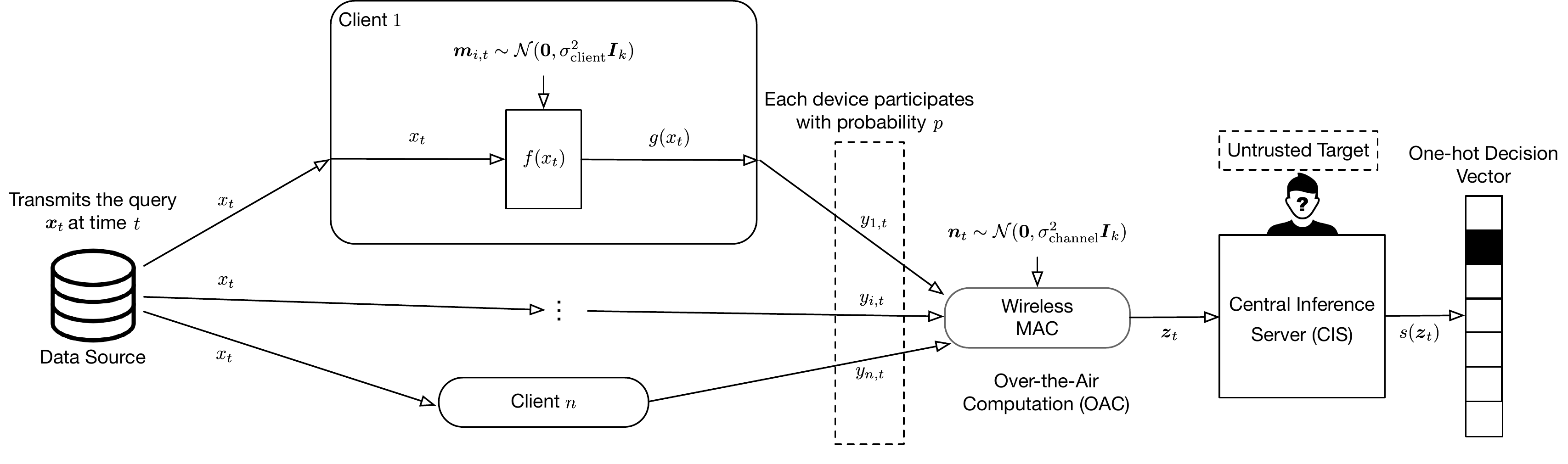}
    \caption{Overview of our ensemble framework for private inference. 
    }
    \label{fig:main_figure}
\end{figure*}

\subsection{Ensemble Methods}
Having received the query $\vec{x}_t$, each participating client makes a local prediction. 
We present alternative ways of doing this by introducing different classes of models, $f_i$'s. Common to all of them, let $\vec{r}_{i,t} \in \real^k$ be a vector containing classifier scores (beliefs) for each class, where $k$ is the number of classes and $j^\mathrm{th}$ element of $\vec{r}_{i,t}$, denoted by $(\vec{r}_{i,t})_j$, contains the score of client $i$ for class $j$. We normalize the sum of the scores in $\vec{r}_{i,t}$ to 1, i.e., $\Vert \vec{r}_{i,t}\Vert_1 = 1$, and hence, the maximum possible score of a class is 1. 

\begin{defn}
$\mathrm{ToOneHot}(j, l)$ function outputs an $l$ dimensional one-hot vector for $j \leq l$, where only the $j^\mathrm{th}$ dimension is $1$ and the rest are $0$.
\end{defn}

{\it Belief summation} method sums beliefs of the participating clients for all the classes and the CIS later selects the class with the highest total score. Thus, it uses the following model for client $i$:
\begin{equation}
    f_i (\vec{x}_t) = \vec{r}_{i,t}.
\end{equation}

{\it Majority voting with OAC} method allows participating clients to vote for a class and the CIS later selects the class with highest number of votes. Hence, it uses the following model for client $i$:
\begin{equation}
    f_i (\vec{x}_t) = \mathrm{ToOneHot} \left( \argmax_{j\in[k]} (\vec{r}_{i,t})_j, k \right).
\end{equation}

Hence, while belief summation with OAC combines local discriminative scores, majority voting with OAC combines predicted labels.







\subsection{Ensuring Privacy}\label{subsec:ensuring-privacy}
Next, we explain how we make our inference procedure privacy-preserving by introducing some randomness. First, we formally define DP for our ensemble inference task as follows.

Let $\mathcal{L}$ and $\mathcal{L}'$ be the sets of local models of the clients, which differ at most in one of the clients, i.e., $\mathcal{L}=\{f_j\}\cup \{f_i : i\in[n]\setminus j\}$ and $\mathcal{L}'=\{f_j'\}\cup \{f_i : i\in[n]\setminus j\}$ such that $f_j\neq f_j'$. Such $\mathcal{L}$ and $\mathcal{L}'$ are called \emph{neighboring} sets. In our case, since we aim to protect the local models from CIS, $\Vec{z}_t$ can be considered as a randomized function, and the set of local models $\mathcal{L}$ or $\mathcal{L}'$ can be considered as its inputs. Hence, all the DP guarantees given in the paper will consider local-model-level privacy guarantees.

\begin{defn}
Let $M:\mathbb{L}\to \real^k$ be a randomized algorithm and $\mathcal{L}$ and $\mathcal{L}'$ are two possible neighboring model sets. For $\varepsilon>0$ and $\delta \in [0,1)$, $M$ is called $(\varepsilon,\delta)$-DP if
\begin{equation}
    \Pr(M(\mathcal{L})\in \mathcal{R}) \leq e^\varepsilon \Pr(M(\mathcal{L}')\in \mathcal{R}) + \delta,
\end{equation}
for all neighboring pairs $(\mathcal{L},\mathcal{L}')$  and $\forall$ $\mathcal{R}\subset \mathbb{R}^k$.
\end{defn}

To achieve DP guarantees, the output released to an adversary should be randomized. In our paper, we consider releasing a noisy version of model outputs, $f_i(\vec{x}_t)$, for each client with a Gaussian noise \cite{dwork2006calibrating}. Note that in OAC, $\vec{z}_t$ already has channel noise\TextBurak{, which provides some degree of privacy guarantees}. However, to achieve the desired level of DP, channel noise may not be large enough and we cannot control or reliably know its variance. Thus, it is not a reliable source of randomness \cite{hasirciouglu2021private}, and we ignore the channel noise while \TextBurak{analysing} privacy guarantees. Instead, we have each client add some additional Gaussian noise before releasing their contributions. \TextBurak{Note that ignoring the channel noise in the privacy analysis results in weaker privacy guarantees. In reality, the privacy guarantees are slightly better than the ones we obtain in this work due to channel noise.} We generate a noisy version of our model prediction as follows.
\begin{equation}
    g_i (\vec{x}_t) = f_i(\vec{x}_t) + \vec{m}_{i,t},
\end{equation}
where $\vec{m}_{i,t} \sim \mathcal{N}(\vec{0}, \sigma^{2}_\mathrm{client} \vec{I}_k)$. One of the main advantages of OAC is that the noise added by different clients are also aggragated at CIS. Thus, it has a further privacy amplification effect. We provide the analysis of the privacy guarantees achieved by our framework in \secref{privacy-analysis}. This analysis reveals that DP guarantees are directly dependent on the variance of the aggregated noise at the CIS. Hence, to obtain DP guarantees, independent of the number of participating clients, each client should add a Gaussian noise with $\sigma_{\mathrm{client}}^2=\sigma^2/|\mathcal{P}_t|$, where $\sigma^2$ is a constant depending on the desired DP guarantees \TextBurak{and $\mathcal{P}_t$ is the set of participating clients}. Hence, we assume that the number of participating clients is known to the other participating clients, but secret from the CIS. 

\subsection{Transmission}

We need to make sure that each client's noisy score $g_i(\vec{x}_t)$ is received at CIS at the same power level. Recall that the channel gain for each client is perfectly known by that client, which then employs channel inversion to cancel its effect. Thus, each client scales the signal by $1/h_{i,t}$. Note that since a client does not participate the inference if its channel has a low gain, this scaling does not result in an excessive power usage. The CIS may require a specific power level for the reception of the signals depending on the available power of the clients. Hence, the clients further scale their signals with a constant denoted by $A_t$, and the transmitted signal is 
\begin{equation}
    \vec{y}_{i,t} = \begin{cases} 
      A_t g_i (\vec{x}_t)/h_{i,t}, & \mathrm{if \,\,}  i\in \mathcal{P}_t   \\ 
      \vec{0}, & \mathrm{otherwise}
      \end{cases}
\end{equation}

\subsection{Final Decision by CIS}

The signal received by the CIS at time $t$ becomes
\begin{equation}
\vec{z}_t = A_t \left( \sum_{i\in\mathcal{P}_t}\vec{m}_{i,t} + \sum_{i \in \mathcal{P}_t} f_i(\vec{x}_t)\right) + \vec{n}_t.
\end{equation}
Thus, the variance of total noise received by the CIS at time $t$ is $\sigma^2_\mathrm{CIS} = \sigma^{2}_\mathrm{channel} + \TextSelim{|\mathcal{P}_t|}  A_t^2  \sigma^{2}_\mathrm{client}$\TextSelim{, i.e., $\vec{z}_t \sim \mathcal{N}(\vec{0}, \sigma^{2}_\mathrm{CIS} \vec{I}_k)$}. After receiving $\vec{z}_t$, CIS multiplies the received signal by $\frac{1}{A_t}$ to recover the desired signal and applies the $\argmax$ function to decide the most probable class. That is, it applies $ s (\vec{z}_t ) = \argmax_{j\in[k]} \frac{1}{A_t} \vec{z}_{t,j}.$

\begin{algorithm}[tb!]
\caption{Over-the-Air Private Ensemble Inference}
\label{algor:model}
\begin{algorithmic}
\Require Trained client model $f_i(\cdot)$ for every client $i$, CIS model $s (\cdot)$, new sample $\vec{x}_t$ at timestep $t$
\Ensure Index of the decided class
\Function{OTA\_Private\_Ensemble}{}
    \State Let $\mathcal{P}_t$ contain each client $i$ with probability $p$, independently
    \ForEach{client $i \in \mathcal{P}_t$ in {\it parallel}}
        \State Client $i$ receives $\vec{x}_t$
        \State Calculate $f_i(\vec{x}_t)$ \Comment{Client Model}
        \State $g_i (\vec{x}_t) = f_i(\vec{x}_t) + \mathcal{N}(\vec{0}, \sigma^{2}/|\mathcal{P}_t| \vec{I}_k)$ \Comment{Add noise}
        \State Transmit $A_t g_i (\vec{x}_t)/h_{i,t}$
    \EndFor
    \State $\vec{z}_t = \vec{n}_t +  A_t \sum_{i\in\mathcal{P}_t}g_i(\vec{x}_t)$ \Comment{Air Sum}
    \State CIS receives $\vec{z}_t$
    \State \Return $s(\vec{z}_t)$ \Comment{CIS Model}
\EndFunction
\end{algorithmic}
\end{algorithm}

\algorref{model} summarizes all the steps introduced in this section.
\section{Privacy Analysis}
\label{sec:privacy-analysis}

In this section, we provide the privacy analysis of the proposed over-the-air ensembling scheme. 
We first analyze the case in which all the clients participate. 

\begin{thm}
\label{thm:eps-del-thm}
If all the clients participate in the inference, i.e., $p=1$, then, \algorref{model} is $(\varepsilon,\delta)$-DP such that for any $\varepsilon>0$,
\begin{equation}
\label{eq:eps-del-thm}
\delta=\Phi(1/(\sqrt{2}\sigma)-\varepsilon\sigma/\sqrt{2})-e^\varepsilon\Phi(-1/(\sqrt{2}\sigma)-\varepsilon\sigma/\sqrt{2}),    
\end{equation}
where $\Phi$ is the CDF of standard normal distribution.
\end{thm}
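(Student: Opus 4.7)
The plan is to reduce the analysis to the analytic Gaussian mechanism of Balle and Wang applied to the aggregated signal at the CIS. I would first peel away the protocol elements that either vanish by hypothesis or are invertible. By assumption the channel noise $\vec{n}_t$ is excluded from the privacy analysis, and the descaling by $1/A_t$ at the CIS is a data-independent bijection that preserves $(\varepsilon,\delta)$-DP. Thus the analysis reduces to the mechanism
\[ \widetilde{M}(\mathcal{L}) = \sum_{i=1}^n f_i(\vec{x}_t) + \vec{w},\qquad \vec{w}=\sum_{i=1}^n \vec{m}_{i,t}. \]
Since the $\vec{m}_{i,t}$ are i.i.d.\ $\mathcal{N}(\vec{0},(\sigma^2/n)\vec{I}_k)$ and $|\mathcal{P}_t|=n$ when $p=1$, the aggregated noise is $\vec{w}\sim\mathcal{N}(\vec{0},\sigma^2\vec{I}_k)$, independent of $n$. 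This cancellation of $n$ is precisely the privacy amplification that OAC provides for free, and it is the reason the per-client variance is chosen as $\sigma^2/|\mathcal{P}_t|$.

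Next, I would bound the $\ell_2$-sensitivity of the signal part. For neighbors $\mathcal{L},\mathcal{L}'$ differing only in the $j$-th model, the sums differ by $f_j(\vec{x}_t)-f_j'(\vec{x}_t)$. In both proposed ensemble methods each $f_i(\vec{x}_t)$ lies in the probability simplex (unit $\ell_1$ norm with non-negative entries for belief summation, a one-hot vertex for majority voting). Because the map $\vec{p}\mapsto\|\vec{p}-\vec{q}\|_2^2$ is convex on the compact simplex for fixed $\vec{q}$, its maximum over $\vec{p}$ is attained at a vertex, and by symmetry so is the subsequent maximum over $\vec{q}$. The maximum therefore occurs at a pair of distinct standard basis vectors $\vec{e}_a,\vec{e}_b$, yielding $\Delta_2 \le \|\vec{e}_a-\vec{e}_b\|_2=\sqrt{2}$.

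Finally, I would invoke the analytic Gaussian mechanism: for i.i.d.\ Gaussian noise with variance $\sigma^2 \vec{I}_k$ and $\ell_2$-sensitivity $\Delta$, the tight $(\varepsilon,\delta)$ DP curve is
\[ \delta(\varepsilon) = \Phi\!\left(\tfrac{\Delta}{2\sigma}-\tfrac{\varepsilon\sigma}{\Delta}\right) - e^{\varepsilon}\,\Phi\!\left(-\tfrac{\Delta}{2\sigma}-\tfrac{\varepsilon\sigma}{\Delta}\right). \]
Substituting $\Delta=\sqrt{2}$ gives $\Delta/(2\sigma)=1/(\sqrt{2}\sigma)$ and $\varepsilon\sigma/\Delta=\varepsilon\sigma/\sqrt{2}$, which reproduces \eqref{eq:eps-del-thm} verbatim.

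The main obstacle I anticipate is the sensitivity step rather than the noise analysis. One has to argue carefully that both ensemble methods genuinely cap the per-model $\ell_2$-contribution at $\sqrt{2}$ via the simplex-diameter computation above, and also insist on the tight Balle--Wang curve rather than the classical Gaussian-mechanism bound $\delta\le \tfrac{4}{5}\exp(-\varepsilon^2\sigma^2/(2\Delta^2))$ which is too loose to recover the stated formula and, moreover, requires $\varepsilon<1$. Once the sensitivity bound is established and the analytic mechanism is invoked, the theorem follows by direct substitution.
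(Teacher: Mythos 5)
Your proposal is correct and takes essentially the same route as the paper: the paper likewise computes the $\ell_2$-sensitivity as the simplex diameter (working with the unscaled received signal, so $C=\sqrt{2}A_t$ and $\tilde{\sigma}=\sigma A_t$, where the $A_t$ cancels exactly as in your bijection/descaling step) and then invokes the tight Balle--Wang analytic Gaussian mechanism to obtain \eqref{eq:eps-del-thm}. Your explicit convexity argument pinning the sensitivity maximum to a pair of distinct simplex vertices is a slightly more careful rendering of the same step the paper states informally.
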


\begin{proof}
Our theorem is a special case of the following lemma.

\begin{lem}[Theorem 8 in \cite{balle2018improving}]\label{lem:analytical_GM}
Let $f:\mathbb{L}\to \mathbb{R}^k$ be a function with $||f(\mathcal{L})-f(\mathcal{L}')||_2\leq C$, where $\mathcal{L}$ and $\mathcal{L}'$ are neighboring inputs and $||\cdot||_2$ is $L_2$ norm. A mechanism $M(\mathcal{L})=f(\mathcal{L})+\mathcal{N}(0,\tilde{\sigma}^2\Vec{I}_k)$ is $(\varepsilon,\delta)$-DP if and only if 
\begin{equation}
\label{eq:analytical_GM}
    \Phi\left(C/(2\tilde{\sigma})-\varepsilon\tilde{\sigma}/C\right)-e^{\varepsilon}\Phi\left(-C/(2\tilde{\sigma})-\varepsilon\tilde{\sigma}/C\right)\leq \delta.
\end{equation}
\end{lem}

To apply \lemref{analytical_GM} directly in our case, we need to calculate the $L_2$ sensitivity, $C$,  of $\Vec{z}_t$ without any noise, i.e., $\Vec{m}_{i,t}=\Vec{0}, \forall i \in \mathcal{P}_t$. We denote this quantity by $\Vec{\tilde{z}}_t$.
Consider neighboring sets $\mathcal{L}$ and $\mathcal{L}'$. We denote the noiseless vector received by the CIS by $\Vec{\tilde{z}}_t$ when the set of local models is $\mathcal{L}$, and by $\Vec{\tilde{z}}_t'$ when it is $\mathcal{L}'$. Then,
\begin{equation}
    C = \max_{\Vec{\tilde{z}}_t, \Vec{\tilde{z}}_t' } \left\Vert \Vec{\tilde{z}}_t- \Vec{\tilde{z}}_t' \right\Vert_2 = \max_{\Vec{\tilde{z}}_t \Vec{\tilde{z}}_t' } \left ( \sum_{j=1}^{k} (\Vec{\tilde{z}}_{t,j} - \Vec{\tilde{z}}'_{t,j})^2  \right)^{1/2}.
\end{equation}

We know that $\Vec{\tilde{z}}_{t,j} \in [0,A_t], \forall j \in [k]$ and $\Vert \Vec{\tilde{z}}_{t,j} \Vert_1 = A_t$. The same also applies to $\Vec{\tilde{z}}_t'$. Hence, $\Vert \Vec{\tilde{z}}_t- \Vec{\tilde{z}}_t' \Vert_2$ is maximized 
when $\Vec{\tilde{z}}_t$ and $\Vec{\tilde{z}}_t'$ have only one non-zero element, and the indices of these non-zero elements are different in both vectors.
Then, $C=\max_{\Vec{\tilde{z}}_t, \Vec{\tilde{z}}_t' } \Vert \Vec{\tilde{z}}_t- \Vec{\tilde{z}}_t' \Vert_2=\sqrt{2}A_t$.

Finally, by substituting $C=\sqrt{2}A_t$ and $\tilde{\sigma} = \sigma A_t$ into \eqref{analytical_GM}, we obtain \eqref{eps-del-thm}.
\end{proof}

Next, we present the amplification effect of client sampling on the privacy guarantees.

\begin{thm}
\label{thm:eps-del-samp-thm}
If each client independently participate in inference with probability $p<1$, then \algorref{model} is $(\varepsilon',\delta')$-DP, where, for any $\varepsilon'>0$, 
\begin{multline}
\label{eq:eps-del-samp-thm}
\delta'=\frac{p}{1-(1-p)^n}\Big(\Phi(1/(\sqrt{2}\sigma)-\varepsilon\sigma/\sqrt{2})\\
-e^{\varepsilon}\Phi(-1/(\sqrt{2}\sigma)-\varepsilon\sigma/\sqrt{2})\Big),
\end{multline}
where $\varepsilon=\log(1+((1-(1-p)^n)/p)(e^{\varepsilon'}-1))$.
\end{thm}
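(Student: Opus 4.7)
The plan is to derive the claim from \thmref{eps-del-thm} by a standard privacy-amplification-by-subsampling argument, where the effective per-client inclusion probability is $q:=p/(1-(1-p)^n)$ rather than $p$. The factor $q$ arises because the noise variance $\sigma^2_{\mathrm{client}}=\sigma^2/|\mathcal{P}_t|$ is only defined when $|\mathcal{P}_t|\ge 1$, so \algorref{model} implicitly operates conditional on this event, which has probability $1-(1-p)^n$. By Bayes' rule and the exchangeability of the clients under i.i.d.\ sampling, $\Pr(j\in\mathcal{P}_t\mid |\mathcal{P}_t|\ge 1)=p/(1-(1-p)^n)=q$ for every client $j$, matching the factor in \eqref{eq:eps-del-samp-thm}.

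Next I would verify the conditional hypotheses needed for the amplification argument. Fix any realization $S\ni j$ of $\mathcal{P}_t$. The $|S|$ per-client noises aggregate over the air into a single Gaussian with variance $A_t^2\sigma^2$, independent of $|S|$; the $L_2$ sensitivity of the noiseless received signal with respect to swapping $f_j$ for $f_j'$ remains $\sqrt{2}A_t$ because only the $j$-th summand changes. Hence \lemref{analytical_GM} applies conditionally with the same constants as in the proof of \thmref{eps-del-thm}, and the conditional mechanism given $\mathcal{P}_t=S$ is $(\varepsilon,\delta)$-DP with $(\varepsilon,\delta)$ as in \eqref{eq:eps-del-thm}. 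For $S\not\ni j$ the conditional output distribution is identical under the two neighboring model sets, since neither $f_j$ nor $f_j'$ enters $\vec{z}_t$.

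With these two ingredients in place I would invoke the standard subsampling amplification lemma (in the spirit of \cite{balle2018improving}): a mechanism that includes the distinguishing client with probability $q$ and is, conditionally on inclusion, $(\varepsilon,\delta)$-DP, becomes $(\varepsilon',q\delta)$-DP with $e^{\varepsilon'}=1+q(e^\varepsilon-1)$. Plugging in $q=p/(1-(1-p)^n)$ gives $\delta'=q\delta$, which after substituting \eqref{eq:eps-del-thm} reproduces \eqref{eq:eps-del-samp-thm}; solving $e^{\varepsilon'}=1+q(e^\varepsilon-1)$ for $\varepsilon$ yields $\varepsilon=\log(1+((1-(1-p)^n)/p)(e^{\varepsilon'}-1))$, as claimed.

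The load-bearing step, and the one I would handle with care, is the interaction between the implicit conditioning on $|\mathcal{P}_t|\ge 1$ and the amplification lemma: the sampling indicators are no longer independent after conditioning, so one must check that the lemma only needs the correct marginal $q$ for the distinguished client and the conditional $(\varepsilon,\delta)$-DP bound given $\{j\in\mathcal{P}_t\}$, both of which are established above. The remaining manipulations are purely algebraic.
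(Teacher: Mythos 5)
Your overall route is the same as the paper's (privacy amplification by subsampling with effective inclusion probability $q=\eta=p/(1-(1-p)^n)$ coming from conditioning on $|\mathcal{P}_t|\ge 1$, reduced to the Gaussian-mechanism bound of \thmref{eps-del-thm}), and your preparatory computations are right: $q$ via Bayes' rule, the aggregated noise variance $A_t^2\sigma^2$ independent of $|S|$, and sensitivity $\sqrt{2}A_t$ under swapping $f_j\leftrightarrow f_j'$. The gap is the amplification step itself. Writing the conditional output laws as $\mu=(1-q)\mu_0+q\mu_1$ and $\mu'=(1-q)\mu_0+q\mu_1'$, the two ingredients you verify are (i) $\mu_0$ is common to both neighbors and (ii) $\mu_1$ and $\mu_1'$ are $(\varepsilon,\delta)$-indistinguishable. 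The lemma you invoke --- ``inclusion probability $q$ plus conditional-on-inclusion $(\varepsilon,\delta)$-DP implies $(\log(1+q(e^\varepsilon-1)),q\delta)$-DP'' --- is false with only these hypotheses: the amplification comes from the excluded-component mass absorbing the excess likelihood ratio, which additionally requires that $\mu_1$ be $(\varepsilon,\delta)$-close to $\mu_0$. A counterexample: let $\mu_0$ be a point mass outside the common support of $\mu_1,\mu_1'$, with $\mu_1,\mu_1'$ exactly $\varepsilon$-indistinguishable; then an observer who sees an output in that support knows inclusion occurred, and the mixture is only $(\varepsilon,0)$-DP, with no amplification. So the claim in your last paragraph --- that the lemma needs only the marginal $q$ and the conditional bound given $\{j\in\mathcal{P}_t\}$ --- is exactly the step that does not hold.

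The paper fills this hole explicitly: \lemref{privacy_profile} and the advanced joint convexity of \lemref{ajc}, followed by convexity, leave two terms to control, $D_\alpha(\mu_1\Vert\mu_1')$ \emph{and} $D_\alpha(\mu_1\Vert\mu_0)$. The second (included-vs-excluded) term is bounded by a coupling in which the two participating sets differ by adding client 1; the Gaussian-mechanism bound of \lemref{analytical_GM} still applies there because the aggregated noise variance remains $A_t^2\sigma^2$ regardless of $|\mathcal{P}_t|$ (this is the point of the $\sigma^2_{\mathrm{client}}=\sigma^2/|\mathcal{P}_t|$ scaling) and the added client shifts the noiseless aggregate by at most $\Vert A_t f_1(\vec{x}_t)\Vert_2\le A_t\le\sqrt{2}A_t$. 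If you add and verify this included-vs-excluded bound (and then run the mixture argument, e.g.\ via \lemref{ajc}), your proof closes and matches the paper; the remaining algebra for $\varepsilon=\log(1+((1-(1-p)^n)/p)(e^{\varepsilon'}-1))$ is as you say. A minor point: the subsampling/divergence machinery is from \cite{balle2018privacy}, not \cite{balle2018improving}.
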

\begin{proof}
Without loss of generality, let $\mathcal{L}$ and $\mathcal{L'}$ are two neighboring sets of models differing only in the first client's model, i.e. it is either $f_1$ or $f_1'$. Let us write the output distribution of \algorref{model} as mixture distributions. When the model set is $\mathcal{L}$, we have 
$\mu = (1-\eta)\mu_0 + \eta\mu_1$ and when the model set is $\mathcal{L'}$, we have $\mu' = (1-\eta)\mu_0 + \eta\mu_1'$. In these expressions, $\eta$ is the probability that client 1 is sampled, $\mu_0$ is the probability distribution when client 1 is not sampled, $\mu_1$ is the probability distribution when client 1 is sampled and the model set is $\mathcal{L}$ and $\mu_1'$ is the probability distribution when client 1 is sampled and the model set is $\mathcal{L'}$. Recall that we sample client models each with probability $p$ from $\mathcal{L}$ or $\mathcal{L}'$, and the CIS receives non-zero vectors only when $|\mathcal{P}_t|>0$. Hence, $\eta=\Pr\{\text{Client 1 is sampled}\mid |\mathcal{P}_t|>0\}$, resulting in $\eta=p/(1-(1-p)^n)$ via Bayes' rule.

\begin{lem}[Theorem 1 in \cite{balle2018privacy}]
\label{lem:privacy_profile} A mechanism $\mathcal{M}$ is $(\varepsilon',\delta')$-DP
if and only if 
\begin{equation}
\sup_{\mathcal{L},\mathcal{L'}}D_{\alpha}(\mathcal{M}(\mathcal{L})||\mathcal{M}(\mathcal{L'}))\leq\delta',    
\end{equation}
where $\alpha=e^{\varepsilon'}$ and $D_{\alpha}(\mu||\mu')\triangleq\int_{Z}\max \{0,d\mu(z)-\alpha d\mu'(z)\}d(z).$
\end{lem}
\lemref{privacy_profile} implies that it is enough to bound $D_{\alpha}(\mu||\mu')$ to provide DP guarantees. For this, we use the relation in \lemref{ajc}, which is called \emph{advanced joint convexity} of $D_{\alpha}$.

\begin{lem}[Theorem 2 in \cite{balle2018privacy}]
\label{lem:ajc}For $\alpha\geq1$,
we have 
\begin{equation}
\label{eq:advanced_joint_convexity}
D_{\alpha'}\left(\mu||\mu'\right)=\eta D_{\alpha}\left(\mu_{1}||(1-\beta)\mu_{0}+\beta\mu_{1}'\right)
\end{equation}
where $\alpha'=1+\eta(\alpha-1)$ and $\beta=\alpha'/\alpha$. 
\end{lem}

We further upper bound \eqref{advanced_joint_convexity} via convexity:
\begin{equation}\label{eq:convexity}
    D_{\alpha'}\left(\mu||\mu'\right) \leq \eta (1-\beta)D_{\alpha}(\mu_1||\mu_0) + \eta \beta D_{\alpha}(\mu_1||\mu_1').
\end{equation}

To bound $D_{\alpha}(\mu_1||\mu_0)$, observe that there exist a coupling between $\mu_1$ and $\mu_0$ as follows. For $\mu_0$, to guarantee $|\mathcal{P}_t|>0$, let us first sample exactly one client $c$ other than client 1 since we know that client 1 is not sampled. Then apply Poisson sampling on the remaining set, i.e., $[n]\setminus\{c,1\}$, to determine the other participating clients. For $\mu_1$, assume we have the same realization of Poisson sampling on $[n]\setminus \{c,1\}$ as in $\mu_0$. Further, by definition of $\mu_1$, client 1 is also sampled. Hence, $\mu_1$ and $\mu_0$ can be seen as output distributions of \algorref{model} such that the input client sets differ in only one element. Hence, $D_{\alpha}(\mu_1||\mu_0)\leq \delta$ due to \thmref{eps-del-thm}. Similarly, to bound $D_{\alpha}(\mu_1,\mu_1')$, a coupling exists between $\mu_1$ and $\mu_1'$ such that user 1 is sampled and $f_1$ and $f_1'$ are the models in user 1, for $\mu_1$ and $\mu_1'$, respectively. To determine the other participating clients, the same realization of Poisson sampling on $[n]\setminus\{1\}$ is applied in both $\mu_1$ and $\mu_1'$. Since the input client sets also differ in one element, in this case, due to \thmref{eps-del-thm}, we have $D_{\alpha}(\mu_1,\mu_1')\leq \delta$. If we put the bounds for $D_{\alpha}(\mu_1,\mu_0)$ and $D_{\alpha}(\mu_1,\mu_1')$ into \eqref{convexity}, we obtain $D_{\alpha'}(\mu||\mu')\leq \eta \delta$, from which \eqref{eps-del-samp-thm} follows. The expression for $\varepsilon$ can be directly derived from the expression $\alpha'=1+\eta(\alpha-1)$.
\end{proof}


\section{Simulations}


\subsection{The Datasets and Experimental Setup}
We employ four different datasets to demonstrate the effectiveness of our framework: CIFAR-10, CIFAR-100, FashionMNIST and IMDB. CIFAR-10 contains $50.000$ training images, $10.000$ test images, and 10 target classes~\cite{krizhevsky2009learning}.
CIFAR-100 contains the same splits except that target classes are partitioned into 100 subclasses~\cite{krizhevsky2009learning}. FashionMNIST has $50.000$ training images, $10.000$ test images, and 10 target classes~\cite{xiao2017/online}. IMDB dataset has $25.000$ training texts, $25.000$ test texts, and $2$ target classes~\cite{maas-EtAl:2011:ACL-HLT2011}. For all datasets, we use predefined training and test sets, except that we split 10\% of the training set as the validation set and only use the remaining 90\% for training.

For image datasets, we use MobileNetV3-Large~\cite{howard2019searching} except we change its final layer to make it compatible with the target number of classes. Instead of training from scratch, we fine-tune a pre-trained version \cite{marcel2010torchvision} of it for 50 epochs. To make sizes of the images compatible to our network, we interpolate them to $224 \times 224$ images. Since the network receives three channel inputs, for each FashionMNIST sample, we feed the same single channel grayscale image to all input channels. For text datasets, we use DistilBERT-base-uncased~\cite{sanh2019distilbert} model, and again, we fine-tune a pre-trained model \cite{wolf-etal-2020-transformers} for 3 epochs. 

We repeat all the experiments with 5 different random seeds, and report the average results. We compute and report Macro-F1 scores by averaging per-class F1 scores on the test set. We randomly split the training data among the clients equally. We consider $n=20$ clients with a participation probability of $p=1.0$ and a channel signal-to-noise ratio (SNR) of $10$ dB, except when they are changed gradually in \secref{conditions}.

\subsection{Comparison with the Baselines}
In Table~\ref{tab:comparison}, in terms of their Macro-F1 scores, we compare the proposed OAC-based methods with the best client model and the ensemble methods with orthogonal transmission. We choose the model with the highest Macro-F1 score on the same validation set as the best client model. For fairness, the client having the best model transmits its inference over the $k$ channels. In orthogonal methods, all the devices transmit their inferences via different channels, i.e., $\card{\mathcal{P}_t} \times k \in O(nk)$ channels in total. 
\begin{table}[t]
    \centering
    \caption{Comparison with the Baselines}
    \resizebox{0.8\textwidth}{!}{\begin{tabular}{clcccc}
    \toprule
    Privacy & Method & CIFAR-10 & CIFAR-100 & FashionMNIST & IMDB\\ \midrule
\multirow{5}{*}{$\epsilon=\infty$} & Best Client Model & $86.37 {\scriptstyle \pm 0.33}$ & $44.73 {\scriptstyle \pm 1.60}$ & $89.55 {\scriptstyle \pm 0.23}$ & $89.31 {\scriptstyle \pm 0.31}$\\
& Orthogonal Majority Voting & $89.97 {\scriptstyle \pm 0.14}$ & $62.51 {\scriptstyle \pm 0.74}$ & $\mathbf{91.92 {\scriptstyle \pm 0.15}}$ & $90.59 {\scriptstyle \pm 0.06}$\\
& Orthogonal Belief Summation & $90.09 {\scriptstyle \pm 0.12}$ & $\mathbf{63.85 {\scriptstyle \pm 0.60}}$ & $91.91 {\scriptstyle \pm 0.11}$ & $\mathbf{90.64 {\scriptstyle \pm 0.05}}$\\
& Majority Voting with OAC & $89.96 {\scriptstyle \pm 0.14}$ & $62.55 {\scriptstyle \pm 0.67}$ & $\mathbf{91.92 {\scriptstyle \pm 0.13}}$ & $90.62 {\scriptstyle \pm 0.10}$\\
& Belief Summation with OAC & $\mathbf{90.14 {\scriptstyle \pm 0.16}}$ & $63.83 {\scriptstyle \pm 0.59}$ & $91.91 {\scriptstyle \pm 0.13}$ & $\mathbf{90.64 {\scriptstyle \pm 0.07}}$\\
\midrule
\multirow{5}{*}{$\epsilon=1$} & Best Client Model & $12.19 {\scriptstyle \pm 0.22}$ & $1.20 {\scriptstyle \pm 0.05}$ & $12.29 {\scriptstyle \pm 0.30}$ & $53.58 {\scriptstyle \pm 0.32}$\\
& Orthogonal Majority Voting & $22.59 {\scriptstyle \pm 0.10}$ & $2.41 {\scriptstyle \pm 0.14}$ & $23.43 {\scriptstyle \pm 0.55}$ & $65.29 {\scriptstyle \pm 0.23}$\\
& Orthogonal Belief Summation & $22.22 {\scriptstyle \pm 0.13}$ & $2.22 {\scriptstyle \pm 0.12}$ & $23.30 {\scriptstyle \pm 0.55}$ & $64.94 {\scriptstyle \pm 0.24}$\\
& Majority Voting with OAC & $\mathbf{81.27 {\scriptstyle \pm 0.10}}$ & $\mathbf{24.24 {\scriptstyle \pm 0.21}}$ & $\mathbf{84.18 {\scriptstyle \pm 0.21}}$ & $\mathbf{89.32 {\scriptstyle \pm 0.07}}$\\
& Belief Summation with OAC & $80.13 {\scriptstyle \pm 0.24}$ & $20.04 {\scriptstyle \pm 0.24}$ & $83.81 {\scriptstyle \pm 0.22}$ & $89.15 {\scriptstyle \pm 0.08}$\\
\bottomrule
    \end{tabular}}
    \label{tab:comparison}
    \vspace{-10pt}
    \end{table}
We observe that, compared to the best client model, ensemble methods significantly improve the test scores, especially in the private setting. Moreover, while orthogonal and OAC-based methods perform competitively in the non-private setting, when privacy is involved, best client model and orthogonal methods perform near-random, and significantly worse than the OAC-based methods. \TextSelim{Note that orthogonal methods use $\card{\mathcal{P}_t} \times k$ channels, whereas OAC-based methods only use $k$ channels; yet, OAC-based methods outperform orthogonal ones in the private setting.}

Previous studies suggest that ensembling via belief averaging generally performs better than majority voting~\cite{kuncheva2002theoretical,wang2020averaging}. Our non-private results also support this argument as beliefs contain more information compared to conveying local decisions. However, when $\varepsilon=1$, majority voting outperforms belief summation for both orthogonal and OAC-based settings. This can be explained by the fact that the increasing noise levels result in relatively unreliable beliefs, since the individual values of beliefs are smaller, and thus more sensitive to the noise added for privacy.

\subsection{Analysis of Ensembles with OAC for Varying Conditions}
\label{sec:conditions}
\figref{conditions} shows the performance of our OAC-based methods on CIFAR-10 dataset for varying channel SNR, $p$, and $\varepsilon$ values. The left figure shows that the performance of the methods slightly increases as the channel SNR increases, especially for SNR values below 2 dB. In the right figure, we observe that higher $p$ improves the performance significantly in the private setting $(\varepsilon=1)$. Although lower $p$ has a privacy amplification effect which decreases the noise variance required to attain $\varepsilon=1$, we observe that its privacy amplification effect is not as significant as the impact of a fewer client participation on the inference performance. 
In the non-private setting $(\varepsilon=\infty)$, having higher participation also helps to get higher macro-F1 score, but not as much as in the private setting. These plots also show that private setting is more sensitive to these varying conditions for both $p$ and channel SNR.

    \pgfplotsset{footnotesize,samples=10}    \definecolor{color0}{rgb}{0.12156862745098,0.466666666666667,0.705882352941177}
    \definecolor{color1}{rgb}{1,0.498039215686275,0.0549019607843137}
    \definecolor{color2}{rgb}{0.172549019607843,0.627450980392157,0.172549019607843}
    \definecolor{color3}{rgb}{0.83921568627451,0.152941176470588,0.156862745098039}
    \definecolor{color4}{rgb}{0.580392156862745,0.403921568627451,0.741176470588235}
    \definecolor{color5}{rgb}{0.580392156862745,1.0,0.741176470588235}

    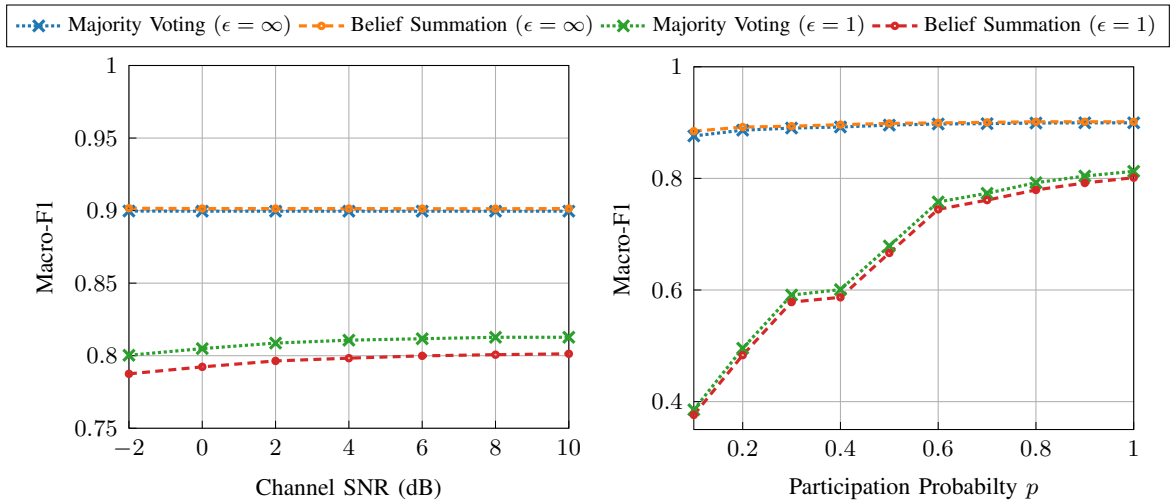
\begin{figure}[t]
    \centering
    \ref*{named}
    \begin{tikzpicture}
    \pgfplotsset{%
    width=.45\textwidth
    }
    \begin{axis}[
    legend columns=4,
    legend entries={\scriptsize Majority Voting $(\epsilon=\infty)$,\scriptsize Belief Summation $(\epsilon=\infty)$,\scriptsize Majority Voting $(\epsilon=1)$,\scriptsize Belief Summation $(\epsilon=1)$},
    legend to name=named,
    log basis x={10},
    log basis y={10},
    tick align=inside,
    tick pos=left,
    x grid style={white!69.0196078431373!black},
    xmin=-2, xmax=10,
    xmode=linear,
    xtick style={color=black},
    y grid style={white!69.0196078431373!black},
    ymin=0.75, ymax=1.0,
    ymode=linear,
    ytick style={color=black},xlabel={\footnotesize Channel SNR (dB)},
    ylabel={\footnotesize Macro-F1},
    max space between ticks=1000pt,
    try min ticks=5,
    grid
    ]
        
        \addplot [densely dotted, very thick, color0, mark=x, mark size=3, mark options={solid}]
        table {
-2	0.8997
0	0.8996
2	0.8996
4	0.8996
6	0.8996
8	0.8996
10	0.8996
12	0.8996
14	0.8996
16	0.8996
18	0.8996
20	0.8996
};
        
        \addplot [densely dashed, very thick, color1, mark=o, mark size=1, mark options={solid}]
        table {
-2	0.9016
0	0.9014
2	0.9014
4	0.9014
6	0.9013
8	0.9013
10	0.9014
12	0.9014
14	0.9013
16	0.9013
18	0.9013
20	0.9013
};
        
        \addplot [densely dotted, very thick, color2, mark=x, mark size=3, mark options={solid}]
        table {
-2	0.8003
0	0.8049
2	0.8087
4	0.8107
6	0.8117
8	0.8127
10	0.8127
12	0.8129
14	0.8129
16	0.8131
18	0.8132
20	0.8134
};
        
        \addplot [densely dashed, very thick, color3, mark=o, mark size=1, mark options={solid}]
        table {
-2	0.7875
0	0.7923
2	0.7964
4	0.7983
6	0.7999
8	0.8007
10	0.8013
12	0.8013
14	0.8016
16	0.8017
18	0.8017
20	0.8019
};
        
        \end{axis}
    \end{tikzpicture}
    \begin{tikzpicture}
    \pgfplotsset{%
    width=.45\textwidth
    }
    \begin{axis}[
    log basis x={10},
    log basis y={10},
    tick align=inside,
    tick pos=left,
    x grid style={white!69.0196078431373!black},
    xmin=0.1, xmax=1.0,
    xmode=linear,
    xtick style={color=black},
    y grid style={white!69.0196078431373!black},
    ymin=0.35, ymax=1.0,
    ymode=linear,
    ytick style={color=black},xlabel={\footnotesize Participation Probabilty $p$},
    ylabel={\footnotesize Macro-F1},
    max space between ticks=1000pt,
    try min ticks=5,
    grid
    ]
        
        \addplot [densely dotted,very thick, color0, mark=x, mark size=3, mark options={solid}]
        table {
0.10	0.8761
0.20	0.8864
0.30	0.8902
0.40	0.8921
0.50	0.8955
0.60	0.8975
0.70	0.8981
0.80	0.8992
0.90	0.8996
1.00	0.8996
};
        
        \addplot [densely dashed,very thick, color1, mark=o, mark size=1, mark options={solid}]
        table {
0.10	0.8846
0.20	0.8920
0.30	0.8935
0.40	0.8963
0.50	0.8985
0.60	0.8995
0.70	0.9004
0.80	0.9015
0.90	0.9015
1.00	0.9014
};
        
        \addplot [densely dotted,very thick, color2, mark=x, mark size=3, mark options={solid}]
        table {
0.10	0.3854
0.20	0.4956
0.30	0.5910
0.40	0.6007
0.50	0.6789
0.60	0.7578
0.70	0.7733
0.80	0.7923
0.90	0.8042
1.00	0.8127
};
        
        \addplot [densely dashed,very thick, color3, mark=o, mark size=1, mark options={solid}]
        table {
0.10	0.3762
0.20	0.4833
0.30	0.5784
0.40	0.5869
0.50	0.6664
0.60	0.7445
0.70	0.7611
0.80	0.7793
0.90	0.7918
1.00	0.8013
};
        
        \end{axis}
    \end{tikzpicture}
    \caption{Comparison of ensemble methods with OAC for varying channel SNR (left) and participation probability $p$ (right) on CIFAR-10 dataset.}
    \label{fig:conditions}
    \vspace{-8pt}
    \end{figure}

\section{Conclusion}
We have introduced a private edge inference framework with ensembling. We have exploited OAC for bandwidth-efficient and private wireless edge inference for the first time in the literature. We have provided DP guarantees exploiting both distributed noise addition and random participation. We have systematically evaluated the introduced ensemble methods with OAC and shown that distributed edge inference with OAC performs significantly better than its orthogonal counterpart while using less resources. We have observed that while transmitting class scores from each client is more informative as an ensembling method, making and transmitting local decisions can be more reliable when noise is introduced to guarantee privacy.

\bibliography{ref.bib}
\bibliographystyle{IEEEtran}

\end{document}